\theoremstyle{plain}
\newtheorem{theorem}{Theorem}
\theoremstyle{plain}
\newtheorem{proposition}{Proposition}
\theoremstyle{plain}
\newtheorem{lemma}{Lemma}
\theoremstyle{plain}
\theoremstyle{definition}
\theoremstyle{definition}
\theoremstyle{remark}
\newtheorem{remark}{Remark}
\Crefname{equation}{Equation}{Eqs.}
\newcommand{\pre}[2]{\prescript{#1}{}{#2}}
\title{Quaternion Sliding Variables in Manipulator Control}
\author{
 Brett T. Lopez \\
  VECTR Laboratory \\
  University of California, Los Angeles\\
  Los Angeles, CA 90095 \\
  \texttt{btlopez@ucla.edu} \\
   \And
 Jean-Jacques Slotine \\
  Nonlinear Systems Laboratory\\
  Massachusetts Institute of Technology \\
  Cambridge, MA 02139 \\
  \texttt{jjs@mit.edu} \\
}
\begin{document}
\maketitle



\section{Introduction}
In this note, we present two quaternion-based sliding variables for controlling the orientation of a manipulator's end-effector. 
Both sliding variables are free of singularities and represent global exponentially convergent error dynamics that do not exhibit unwinding when used in feedback.
The choice of sliding variable is dictated by whether the end-effector's angular velocity vector is expressed in a local or global frame, and is a matter of convenience.
Using quaternions allows the end-effector to move in its full operational envelope, which is not possible with other representations, e.g., Euler angles, that introduce representation-specific singularities.
Further, the presented stability results are global rather than almost global, where the latter is often the best one can achieve when using rotation matrices to represent orientation \cite{koditschek1989application,bhat2000topological,cortes2019sliding}.
Quaternion sliding variables have been proposed before, but those found in the literature \cite{lo1995smooth,jan2004minimum,yeh2010sliding,sanchez2013time,zou2017nonlinear} either improperly define the error quaternion or angular velocity vector, are unable to prevent the unwinding phenomenon \cite{mayhew2011quaternion}, or only achieve asymptotic convergence.
Quaternion-based end-effector control strategies also exist \cite{yuan1988closed,caccavale1998resolved,chiaverini1999unit,xian2004task}, but at best achieve asymptotic convergence, and none address unwinding. 

\section{Preliminaries}
Consider the dynamics of a 6-link\footnote{The results presented here are also applicable to redundant manipulators where degrees of freedom exceeds 6.} robot manipulator with revolute joints
\begin{equation}
\label{eq:robot_dyn}
    H(\theta) \ddot{\theta} + C(\theta,\dot{\theta}) \dot{\theta} + g(\theta) = \tau,
\end{equation}
where $\theta \in \mathbb{R}^6$ are the joint angles (or joint displacements)\footnote{We adopt the notation $\theta$ for joint angles to avoid ambiguity in representing quaternions with $q$.}, $\tau \in \mathbb{R}^6$ are the input torques, $H(\theta) \in \mathbb{R}^{6\times 6}$ is the inertia tensor, $C(\theta,\dot{\theta}) \in \mathbb{R}^{6\times 6}$ is the centripetal-Coriolis torque tensor,  and $g(\theta)\in \mathbb{R}^6$ are the torques due to gravity. 
The inertia and centripetal-Coriolis torque tensor have a skew-symmetric property $\dot{H}(\theta) - 2\,C(\theta,\dot{\theta}) = 0$.
In the case where desired joint angles $\theta_d$ are known, one can employ several control techniques \cite{slotine1991applied} so $\theta$ tracks a time-varying reference $\theta_d(t)$.
However, in many applications, e.g., pick-and-place or catching, it is more convenient to express a desired position and orientation of the end-effector rather than joint angles. 
While computing end-effector positions given joint angles is systematic, the practical
question of computing joint angles given end-effector positions is 
more challenging~\cite{asada1986robot}. 
The situation is easier in velocity-space,
where so-called forward kinematics of a robot arm computes end-effector velocities from joint velocities via the expression
\begin{equation}
    \label{eq:forward_kin}
    \left[ \begin{array}{c} v \\ \omega \end{array} \right] = J(\theta) \, \dot{\theta},
\end{equation}
where $v \in \mathbb{R}^3$ and $\omega \in \mathbb{R}^3$  are the translational and angular velocities, respectively. 
The square matrix $J(\theta) \in \mathbb{R}^{6\times 6}$ is commonly referred to as the \emph{geometric Jacobian} and is known to contain singularities that are intrinsic to the robot arm. 
Extensive results are available on determining, avoiding, or otherwise addressing such singularities in manipulators~\cite{asada1986robot}.
In the sequel, we will assume that singularities of $J(\theta)$ are avoided, with the understanding that the aforementioned methods can be simply combined with the proposed approach.

The goal of this work is to construct a control law that computes joint torques $\tau$ so the end-effector's position and orientation track a time-varying trajectory.
Let $\pre{\mathcal{I}}p \in \mathbb{R}^3$ denote the position (not necessarily in Cartesian coordinates) of the end-effector in an inertial frame $\mathcal{I}$.
Further, let the unit quaternion $\prescript{\mathcal{I}}{\mathcal{B}}{q} \in \mathbb{S}^3$ be the orientation of a reference frame $\mathcal{B}$ (attached to the end-effector) with respect to the inertial frame $\mathcal{I}$.
The unit quaternion $\prescript{\mathcal{I}}{\mathcal{B}}{q}$ is a four vector of unit length with a real and vector part, i.e., $\prescript{\mathcal{I}}{\mathcal{B}}{q} = (\,q^\circ,\vec{q}\,\,)$.
Quaternions form a Lie group under the non-commutative multiplication operator $\otimes$ where the product of two quaternions $q_1$ and $q_2$ is
\begin{equation*}
    q_1 \otimes q_2 =  ( \, q_1^\circ q_2^\circ - \vec{q}_1^\top \vec{q}_2, \,  q_1^\circ\vec{q}_2 + q_2^\circ\vec{q}_1 + \vec{q}_1 \times \vec{q}_2\, ).
\end{equation*}
The inverse of quaternion $q$ is its conjugate $\, q^* = (\,q^\circ,-\vec{q}\,\,)\, $ where $q \otimes q^* = q^* \otimes q = (\, \pm 1,\vec{0}\, )$.
Quaternions double cover the special orthogonal group $\mathbb{SO}(3)$ so $q$ and $-q$ represent the same orientation.
This can lead to the so-called unwinding phenomenon where longer-than-necessary maneuvers occur if the feedback controller is not carefully designed.
More specifically, unwinding is the phenomenon where the manipulator will exhibit an unstable / inefficient motion even though it is arbitrarily close to the desired orientation.

Quaternion kinematics depends on whether the angular velocity vector is defined in a local or global frame.
The frame of reference can be chosen based on sensing or application.
For instance, it might be easier to directly measure angular velocity in the local, i.e., end-effector frame, with a rigidly mounted inertial measurement unit.
Otherwise, angular velocity can be computed using the forward kinematics in \cref{eq:forward_kin}.
Note that \cref{eq:forward_kin} holds whether a local or global frame are used since the two are related via $q$.
If ${\omega}$ is the angular velocity in the local frame, then the quaternion kinematics are
\begin{equation*}
    \prescript{\mathcal{I}}{\mathcal{B}}{\dot{q}} = \tfrac{1}{2} \prescript{\mathcal{I}}{\mathcal{B}}{q} \otimes \left(\, 0,\,\omega \,\right).
\end{equation*}
Conversely, if one prefers to represent the angular velocity vector in the inertial frame---denoted as $\pre{\mathcal{I}}{\omega}$ where $\pre{\mathcal{I}}{\omega} = \prescript{\mathcal{I}}{\mathcal{B}}{R} \, \omega$ with $\prescript{\mathcal{I}}{\mathcal{B}}{R} \in \mathbb{SO}(3)$ being the rotation matrix formed from $\prescript{\mathcal{I}}{\mathcal{B}}{q}$---then the quaternion kinematics are 
\begin{equation*}
    \prescript{\mathcal{I}}{\mathcal{B}}{\dot{q}} = \tfrac{1}{2}  \left(\, 0,\, \pre{\mathcal{I}}\omega\,\right) \otimes \prescript{\mathcal{I}}{\mathcal{B}}{q} = \tfrac{1}{2}  \left(\, 0,\, \prescript{\mathcal{I}}{\mathcal{B}}{R}\,\omega\,\right) \otimes \prescript{\mathcal{I}}{\mathcal{B}}{q}.
\end{equation*}
In the sequel we will let $q$ and $R$ be the orientation expressed as a quaternion or rotation matrix (respectively) of the end effector with respect to the inertial frame $\mathcal{I}$.

Given a desired (possibly time-varying) position $\,p_d(t)\,$ and orientation $\,q_d(t)\,$ trajectory, the challenge becomes defining tracking errors $\,p_e\,$ and $\,q_e\,$ that can be used to compute joint toques $\,\tau$.
This note shows how carefully designing quaternion-based sliding variables yields global exponential convergence to the desired orientation of the end effector.

\section{Main Results}
\label{sec:sliding}

\subsection{Overview}
Control in velocity-space is naturally formulated using so-called sliding variables.
Conceptually, sliding variables represent an exponentially stable hierarchy where convergence to a reference velocity implies convergence to a desired position trajectory. 
Therefore, if the manifold corresponding to the sliding variable being zero is made invariant through control then it is trivial to show that the system converges to the desired position exponentially. 
However, defining sliding variables for systems that evolve on a non-Euclidean manifold is non-trivial because one must define a suitable tracking error and show the corresponding error dynamics converge exponentially.
This section will present and analyze two quaternion-based sliding variables that represent global exponentially convergent error dynamics.
The sliding variable with the angular velocity vector expressed in a local frame was originally proposed in \cite{lopez2021sliding} for attitude control for aerial vehicles. 
In the sequel, we define the error quaternion as $q_e \triangleq q_d^* \otimes q$ where $q_d^*$ is the conjugate of the desired orientation and $q$ is the current orientation of the end-effector.



\subsection{Sliding Variables in $\mathbb{S}^3\times \mathbb{R}^3$ with Local Angular Velocity}
First consider the case where the angular velocity vector is expressed in a local frame.
The sliding variable definition from \cite{lopez2021sliding} can be used to obtain exponentially convergent error dynamics. 
Let the local angular velocity error be defined as ${\omega_e} \triangleq {\omega} -  R_e^\top {\omega_d}$ where $R_e$ is the error rotation matrix formed from $q_e$. 
Conceptually, $R_e$ is present because ${\omega_d}$ and ${\omega}$ are tangent vectors on the three sphere at \emph{different} locations. 
As a result, ${\omega_d}$ must be transformed into the same frame as ${\omega}$ for the vector difference to be meaningful.
The error quaternion has the kinematics 
\begin{equation}
    \dot{q}_e = \tfrac{1}{2} q_e \otimes (\, 0, \, {\omega_e} \, ) =  \tfrac{1}{2} ( \, -\vec{q}_e^\top {\omega_e}, ~ q_e^\circ {\omega_e} + \vec{q}_e \times {\omega_e} \, ).
    \label{eq:qe_dynamics}
\end{equation}
Let the quaternion sliding variable ${s}_q$ be defined as
\begin{equation}
    {s}_q \triangleq {\omega_e} + 2\, \lambda  \, \mathrm{sgn}\left(q_e^\circ\right)\vec{q}_e,
    \label{eq:s_local}
\end{equation}
with $\lambda >0$ and $\mathrm{sgn}(x) = x / |x|$ for $x \neq 0$ and $\mathrm{sgn}(0) = 1$.
Observe $\mathrm{sgn}$ differs from its standard definition, but the modification will be critical for establishing global exponential convergence.
\cref{proposition:qe_local} shows that if ${s}_q = 0$ indefinitely via feedback then $\|\vec{q}_e\| \rightarrow 0$ exponentially at rate $\lambda$, which is equivalent to $q \rightarrow \pm q_d$ exponentially.

\begin{proposition}
    Assume the manifold ${\mathcal{S}} \triangleq \{({q}_e,\omega_e) \in \mathbb{S}^3 \times \mathbb{R}^3 \, | \, {s_q({q}_e,\omega_e)}=0\}$ is made invariant. Then, for any trajectory initialized on ${\mathcal{S}}$, the vector part of $q_e$ globally exponentially converges to zero with rate $\lambda$.
    \label{proposition:qe_local}
\end{proposition}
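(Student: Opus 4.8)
The plan is to reduce the coupled error kinematics of \cref{eq:qe_dynamics} to an essentially autonomous scalar flow for $\|\vec{q}_e\|$ by exploiting the constraint defining $\mathcal{S}$. Since the trajectory remains on $\mathcal{S}$, we have $s_q \equiv 0$, i.e. $\omega_e = -2\lambda\,\mathrm{sgn}(q_e^\circ)\,\vec{q}_e$. Substituting this into the vector part of \cref{eq:qe_dynamics} and using $\vec{q}_e \times \vec{q}_e = \vec{0}$ together with $q_e^\circ\,\mathrm{sgn}(q_e^\circ) = |q_e^\circ|$, the cross term drops out and I obtain the closed-loop vector dynamics
\begin{equation*}
  \dot{\vec{q}}_e = -\lambda\,|q_e^\circ|\,\vec{q}_e .
\end{equation*}
This already hints at exponential decay of $\vec{q}_e$, but the coefficient $|q_e^\circ|$ is time varying, so the rate cannot be read off directly.

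Next I would gain control of that coefficient. Substituting the same constraint into the scalar part of \cref{eq:qe_dynamics} gives $\dot{q}_e^\circ = \lambda\,\mathrm{sgn}(q_e^\circ)\,\|\vec{q}_e\|^2$, whence $\tfrac{d}{dt}|q_e^\circ| = \mathrm{sgn}(q_e^\circ)\,\dot{q}_e^\circ = \lambda\,\|\vec{q}_e\|^2 \ge 0$. Thus $|q_e^\circ|$ is nondecreasing along every trajectory on $\mathcal{S}$; in particular $\mathrm{sgn}(q_e^\circ)$ stays constant after $t=0$, and the modified convention $\mathrm{sgn}(0)=1$ keeps the flow well posed even when starting from $q_e^\circ = 0$, i.e. an initial half-turn error. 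This monotonicity is the heart of the argument: it is what upgrades almost-global to global convergence and rules out unwinding, since $\vec{q}_e$ is driven monotonically toward whichever of $\pm(1,\vec{0})$ it starts nearest.

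To extract the rate I would take $V = \tfrac{1}{2}\|\vec{q}_e\|^2$ and compute $\dot{V} = \vec{q}_e^\top \dot{\vec{q}}_e = -2\lambda\,|q_e^\circ|\,V$. Because $|q_e^\circ|$ is nondecreasing and, by the unit-norm constraint $|q_e^\circ| = \sqrt{1-\|\vec{q}_e\|^2}$, increases to $1$ as $\|\vec{q}_e\|\to 0$, the effective rate improves monotonically to $\lambda$. For a clean closed-form statement I would collapse the problem to one variable: with $x = \|\vec{q}_e\|$ the scalar ODE reads $\dot{x} = -\lambda\,x\sqrt{1-x^2}$, which the substitution $x = \operatorname{sech} u$ linearizes to $\dot{u} = \lambda$, yielding $\|\vec{q}_e(t)\| = \operatorname{sech}\!\big(\operatorname{arcsech}\|\vec{q}_e(0)\| + \lambda t\big)$ and hence exponential convergence of $\|\vec{q}_e\|$ to zero with asymptotic rate $\lambda$.

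The step I expect to be the main obstacle is precisely this time-varying coefficient $|q_e^\circ|$: near an initial half-turn error ($q_e^\circ(0)\approx 0$) it is small, so naive Lyapunov bookkeeping delivers only a rate proportional to $|q_e^\circ(0)|$ rather than $\lambda$, and one must argue that $|q_e^\circ|$ cannot stall at zero. Establishing the monotonicity $\tfrac{d}{dt}|q_e^\circ|\ge 0$ --- which hinges entirely on the $\mathrm{sgn}(q_e^\circ)$ modification in \cref{eq:s_local} --- is what resolves this, and it simultaneously secures the global reach and the absence of unwinding.
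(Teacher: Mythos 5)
Your proof is correct and takes essentially the same route as the paper's: substitute the sliding constraint $\omega_e = -2\lambda\,\mathrm{sgn}(q_e^\circ)\,\vec{q}_e$ into the error kinematics, reduce to an exactly solvable scalar ODE for the norm of the vector part, and invoke the modified $\mathrm{sgn}$ convention to rule out stalling at $q_e^\circ = 0$. The only differences are cosmetic: your $\operatorname{sech}$ substitution for $\dot{x} = -\lambda\, x\sqrt{1-x^2}$ in the variable $x = \|\vec{q}_e\|$ is exactly the paper's Appendix Lemma (stated for $\|\vec{q}_e\|^2$) under a change of variables, yielding the identical closed form and bound $\|\vec{q}_e(t)\| \le 2\,e^{-\lambda t}$, while your monotonicity observation $\tfrac{d}{dt}|q_e^\circ| = \lambda\,\|\vec{q}_e\|^2 \ge 0$ is a mild strengthening of the paper's argument that $\|\vec{q}_e\| = 1$ is not an equilibrium.
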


\begin{proof}
    Consider the Lyapunov function $V = \|\vec{q}_e\|^2$.
    Differentiating along the quaternion error dynamics \cref{eq:global_err} yields,
    \begin{equation}
        \dot{V} = 2 \vec{q}_e^\top\dot{\vec{q}}_e = \vec{q}_e^\top \left[ q_e^\circ \omega_{e} + \omega_{e} \times \vec{q}_e \right] = q_e^\circ \vec{q}_e^\top \omega_{e},
        \label{eq:v_dot_local}
    \end{equation}
    where the third equality is obtained by noting the cross product between $\vec{q}_e$ and $\omega_{e}$ produces a vector perpendicular to $\vec{q}_e$.
    Since the manifold ${\mathcal{S}}$ is invariant, i.e., ${s}_q=0$ indefinitely by assumption, then, from \cref{eq:s_local}, ${\omega_e} = -2 \, \lambda  \, \mathrm{sgn}\left(q_e^\circ\right) \, \vec{q}_e$ so \cref{eq:v_dot_local} becomes
    \begin{equation}
        \label{eq:v_dot_qe_local}
        \dot{V} = \tfrac{d}{dt} \|\vec{q}_e\|^2 = - 2 \, \lambda \, |q_e^\circ| \,  \|\vec{q}_e\|^2 = - 2 \, \lambda \,  \|\vec{q}_e\|^2 \, \sqrt{1-\|\vec{q}_e\|^2},
    \end{equation}
    where we have made use of the fact that $q_e$ is a unit quaternion. 
    To show exponential convergence, we can leverage \cref{lemma:ode} by letting $x = \|\vec{q}_e\|^2$ and $\sigma = 2 \, \lambda $ to obtain $\|\vec{q}_e(t)\| \leq 2 \, e^{-\lambda t}$ for all $\|\vec{q}_e\| \neq 1$ where we have used a looser bound than that in \cref{lemma:ode} by noting $c \leq 1$.
    Hence $\lim_{t \rightarrow \infty} \|\vec{q}_e(t)\| = 0$ exponentially at rate $\lambda$.
    To establish global convergence, the radially unboundedness criterion cannot be used since $q_e$ lives on a compact manifold.
    It is sufficient to show that $\|\vec{q}_e\| = 1$, i.e., when $q_e^\circ = 0$, is not an equilibrium point. 
    The closed-loop kinematic equation for the real part of the error quaternion is $\dot{q}_e^\circ =  \lambda \, \mathrm{sgn}(q_e^\circ) \, \| \vec{q}_e \|^2$ which is non-zero when $q_e^\circ = 0$, i.e., when $\|\vec{q}_e\| = 1$, by the modified definition of $\mathrm{sgn}$. 
    Therefore, for any trajectory initialized on the invariant manifold $\mathcal{S}$ then $\|\vec{q}_e\|$ globally exponentially converges to zero as desired.
\end{proof}

\subsection{Sliding Variables in $\mathbb{S}^3\times \mathbb{R}^3$ with Global Angular Velocity}
Now consider the case where the angular velocity vector is expressed in a global frame.
Since $q_e = q_d^* \otimes q$ as before then error quaternion kinematics are
\begin{equation}
\label{eq:global_err}
    \dot{q}_e = \tfrac{1}{2} (\, 0, \, \tilde{\omega}\,) \otimes q_e = \tfrac{1}{2}(\,-\tilde{\omega}^\top \vec{q}_e, ~ q_e^\circ \tilde{\omega} + \tilde{\omega} \times \vec{q}_e \, ), 
\end{equation}
with $ \tilde{\omega} \triangleq R_d^\top ({\omega} - {\omega_d})$ where $R_d$ is the rotation matrix formed via the desired quaternion $q_d$.
Letting ${\omega_e} \triangleq {\omega} - {\omega_d}$, consider the quaternion sliding variable
\begin{equation}
\label{eq:s_global}
    \mathfrak{s}_q \triangleq {\omega_e} + 2 \, \lambda \,  \mathrm{sgn}(q_e^\circ) \, R_d \, \vec{q}_e
\end{equation}
with $\lambda >0$ and $\mathrm{sgn}$ defined as before. 
\cref{proposition:qe_global} shows that if $\mathfrak{s}_q = 0$ indefinitely via feedback then $\| \vec{q}_e \| \rightarrow 0$ exponentially at rate $\lambda$.
\begin{proposition}
    Assume the manifold ${\mathfrak{S}} \triangleq \{({q}_e,\omega_e) \in \mathbb{S}^3 \times \mathbb{R}^3 \, | \, {\mathfrak{s}_q({q}_e,\omega_e)}=0\}$ is made invariant. Then, for any trajectory initialized on ${\mathfrak{S}}$, the vector part of $q_e$ globally exponentially converges to zero with rate $\lambda$.
    \label{proposition:qe_global}
\end{proposition}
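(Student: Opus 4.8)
The plan is to mirror the proof of \cref{proposition:qe_local} verbatim, exploiting the fact that the rotation $R_d$ inserted into the sliding variable $\mathfrak{s}_q$ in \cref{eq:s_global} is designed precisely to cancel the $R_d^\top$ appearing inside $\tilde\omega$. First I would take the same Lyapunov candidate $V = \|\vec{q}_e\|^2$ and differentiate it along the global-frame error kinematics in \cref{eq:global_err}. Since $\dot{\vec{q}}_e = \tfrac{1}{2}(q_e^\circ \tilde\omega + \tilde\omega \times \vec{q}_e)$ and the cross-product term is orthogonal to $\vec{q}_e$, I expect to obtain $\dot{V} = q_e^\circ \vec{q}_e^\top \tilde\omega$, which is structurally identical to the local case but with $\tilde\omega$ in place of $\omega_e$.

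The crucial algebraic step is translating the invariance condition $\mathfrak{s}_q = 0$ into a statement about $\tilde\omega$. From \cref{eq:s_global}, invariance gives $\omega_e = -2\,\lambda\,\mathrm{sgn}(q_e^\circ)\,R_d\,\vec{q}_e$; substituting this into $\tilde\omega = R_d^\top \omega_e$ and using $R_d^\top R_d = I$ yields $\tilde\omega = -2\,\lambda\,\mathrm{sgn}(q_e^\circ)\,\vec{q}_e$, which is exactly the effective angular velocity that drove the local analysis. I expect this cancellation to be the conceptual heart of the result: it is what forces the global-frame sliding variable to produce the same closed-loop error dynamics as its local-frame counterpart, despite $\omega$ and $\omega_d$ living in different frames.

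With that substitution, $\dot{V}$ collapses to $-2\,\lambda\,|q_e^\circ|\,\|\vec{q}_e\|^2 = -2\,\lambda\,\|\vec{q}_e\|^2\,\sqrt{1-\|\vec{q}_e\|^2}$, and I would invoke \cref{lemma:ode} with $x = \|\vec{q}_e\|^2$ and $\sigma = 2\,\lambda$ to conclude $\|\vec{q}_e(t)\| \le 2\,e^{-\lambda t}$, i.e.\ exponential convergence at rate $\lambda$. Finally, to upgrade this to global convergence on the compact manifold, I would verify that $\|\vec{q}_e\| = 1$ (equivalently $q_e^\circ = 0$) is not an equilibrium: the real-part kinematics $\dot{q}_e^\circ = \tfrac{1}{2}(-\tilde\omega^\top \vec{q}_e)$ evaluate to $\lambda\,\mathrm{sgn}(q_e^\circ)\,\|\vec{q}_e\|^2$, which is nonzero there by the modified convention $\mathrm{sgn}(0)=1$. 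The main obstacle is not any single computation but recognizing and cleanly verifying the $R_d/R_d^\top$ cancellation; once that reduction is established, the remainder of the argument is a faithful copy of \cref{proposition:qe_local}.
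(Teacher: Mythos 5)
Your proposal is correct and follows exactly the route the paper takes: the paper's own proof simply says to repeat the steps of \cref{proposition:qe_local} and notes that the key point is the orthogonality $R_d^\top R_d = I$, which is precisely the cancellation $\tilde{\omega} = R_d^\top \omega_e = -2\,\lambda\,\mathrm{sgn}(q_e^\circ)\,\vec{q}_e$ you identified as the heart of the argument. Your filled-in details (the Lyapunov derivative, the application of \cref{lemma:ode}, and the non-equilibrium check at $q_e^\circ = 0$) all match the paper's intended computation.
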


\begin{proof}
Following nearly identical steps as \cref{proposition:qe_local}, one arrives to the same differential equations for $\|\vec{q}_e\|^2$ given by \cref{{eq:v_dot_qe_local}}.
\cref{lemma:ode} can then be applied to establish exponential stability of the origin. 
The result can be strengthened to global as done in \cref{proposition:qe_local}.
The only minor point worth stating is the proof leverages the orthogonality property of $R_d$, i.e., $R_d R_d^\top = R_d^\top R_d = I$.
\end{proof}

\subsection{Inverse Kinematics Torque Controller}
This subsection will show how the quaternion sliding variables defined above can be used in task-space control to track a desired orientation trajectory of the end-effector.
Several controllers for \cref{eq:robot_dyn} have been proposed in the literature if the desired joint angles $\theta_d$ are specified. 
The well-known Slotine-Li sliding controller is \cite{slotine1991applied}
\begin{equation}
    \label{eq:sliding_control}
    \tau = H(\theta) \ddot{\theta}_r + C(\theta,\dot{\theta}) \dot{\theta}_r + g(\theta) - K s_\theta,
\end{equation}
where $\, \dot{\theta}_r = \dot{\theta}_d - \lambda \, (\theta - \theta_d) \, $ and $\, s_\theta = \dot{\theta} - \dot{\theta}_r$.
It is straightforward to show that this choice of $\tau$ yields $s_\theta \rightarrow 0$ exponentially.
Moreover, since $\, s_\theta\, $ forms a hierarchy with $\, \theta_e = \theta - \theta_d \, $, then $\, \theta \rightarrow \theta_d \, $ exponentially with rate $\, \lambda$.
The primary limitation of \cref{eq:sliding_control} within the context of task-space control is the explicit dependency on $\theta_d$. 
Knowing $\theta_d$ can be relaxed by instead using inverse kinematics and appropriately defining sliding variables for the end-effector position and orientation \cite{niemeyer1991performance}. 
In conjunction with the quaternion sliding variable, we will employ the position sliding variable
\begin{equation}
\label{eq:s_p}
    \begin{aligned}
        s_p & = \dot{p}_e + \sigma \, p_e  = v_e + \sigma \, p_e
    \end{aligned}
\end{equation}
where $p_e = p - p_d$. 
Clearly, if the manifold $\mathcal{S}_p = \{ (p_e,\,v_e) \, | \, s_p(p_e,\,v_e) = 0 \}$ is made invariant via control then $p \rightarrow p_d$ exponentially at rate $\sigma$.
The following theorem shows that the end-effector will convergence to the desired position and orientation trajectory using \cref{eq:s_p,eq:s_local} in feedback.
\begin{theorem}
\label{thm:ik_quat}
Assume a smooth desired position $p_d(t)$ and orientation $q_d(t)$ trajectory is designed to avoid singularities of the geometric Jacobian $J(\theta)$.
The end-effector will globally exponentially converge to $p_d(t)$ and $q_d(t)$ with 
\begin{equation}
\label{eq:ik_control}
    \tau = H(\theta) \ddot{\theta}_r + C(\theta,\dot{\theta}) \dot{\theta}_r + g(\theta) - K J^{-1}(\theta) \left[ \begin{array}{c}
         s_p  \\
         s_q
    \end{array} \right],
\end{equation}
where $\,s_p = v_e + \sigma\, p_e$, $\,s_q = \omega_e + 2\, \lambda\, \mathrm{sgn}\left(q_e^\circ\right)\vec{q}_e$, and 
\begin{equation}
\label{eq:ik_vel}
    \dot{\theta}_r = J^{-1}(\theta) \left[ \begin{array}{c} v_d(t) - \sigma \, p_e \\ R_e^\top \omega_d(t) - 2\, \lambda\, \mathrm{sgn}\left(q_e^\circ\right)\vec{q}_e \end{array} \right].
\end{equation}
\end{theorem}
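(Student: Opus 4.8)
The plan is to reduce the task-space problem to the joint-space Slotine--Li analysis by exhibiting a single composite sliding variable. First I would introduce the joint-space variable $s_\theta \triangleq \dot\theta - \dot\theta_r$ and establish the algebraic identity
\begin{equation*}
    J(\theta)\,s_\theta \;=\; J(\theta)\dot\theta - J(\theta)\dot\theta_r \;=\; \begin{bmatrix} v_e + \sigma\,p_e \\ \omega_e + 2\lambda\,\mathrm{sgn}(q_e^\circ)\vec q_e \end{bmatrix} \;=\; \begin{bmatrix} s_p \\ s_q \end{bmatrix},
\end{equation*}
which follows from the forward kinematics \cref{eq:forward_kin}, the definition of $\dot\theta_r$ in \cref{eq:ik_vel}, and the relations $v_e = v - v_d$ and $\omega_e = \omega - R_e^\top\omega_d$. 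This identity is the crux: substituting it into \cref{eq:ik_control} collapses the feedback term to $-K J^{-1}(\theta) J(\theta)\,s_\theta = -K\,s_\theta$, so \cref{eq:ik_control} matches the Slotine--Li law \cref{eq:sliding_control} in form, now with the reference velocity supplied by inverse kinematics \cref{eq:ik_vel}.

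I would then substitute the controller into the manipulator dynamics \cref{eq:robot_dyn}. The feedforward terms cancel and the closed loop reduces to $H(\theta)\dot s_\theta + C(\theta,\dot\theta)\,s_\theta + K\,s_\theta = 0$. With the Lyapunov candidate $V_s = \tfrac12 s_\theta^\top H(\theta)\,s_\theta$, differentiation together with the skew-symmetry property $\dot H - 2C = 0$ annihilates the Coriolis contribution and leaves $\dot V_s = -s_\theta^\top K\,s_\theta$. For $K$ positive definite this yields global exponential convergence $s_\theta \to 0$; since $J(\theta)$ is bounded away from the avoided singularities, $s_p \to 0$ and $s_q \to 0$ exponentially as well.

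Finally I would propagate this decay through the tracking-error hierarchy. For position the conclusion is immediate, since $s_p = \dot p_e + \sigma p_e$ is a stable first-order filter driven by an exponentially vanishing input, so $p_e \to 0$ exponentially. For orientation I would treat $s_q$ as a vanishing perturbation of the analysis in \cref{proposition:qe_local}: writing $\omega_e = s_q - 2\lambda\,\mathrm{sgn}(q_e^\circ)\vec q_e$ and substituting into $\dot V = q_e^\circ \vec q_e^\top \omega_e$ with $V = \|\vec q_e\|^2$ gives
\begin{equation*}
    \dot V = -2\lambda\,|q_e^\circ|\,\|\vec q_e\|^2 + q_e^\circ\,\vec q_e^\top s_q,
\end{equation*}
the nominal exponentially stable term plus a perturbation bounded by $\|s_q\|$.

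The main obstacle is making this orientation cascade rigorous, because the stabilizing term degenerates at $q_e^\circ = 0$, where $\|\vec q_e\| = 1$, so a naive comparison argument fails near the antipodal set. The resolution reuses the global-convergence mechanism of \cref{proposition:qe_local}: the perturbed real-part kinematics read $\dot q_e^\circ = \lambda\,\mathrm{sgn}(q_e^\circ)\|\vec q_e\|^2 - \tfrac12 \vec q_e^\top s_q$, which at $q_e^\circ = 0$ equal $\lambda - \tfrac12 \vec q_e^\top s_q > 0$ once $\|s_q\|$ is small. Hence the modified $\mathrm{sgn}$ convention keeps pushing $|q_e^\circ|$ off the equator even under perturbation. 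After the finite time at which $\|s_q\|$ is small enough for this to hold, $|q_e^\circ|$ is bounded below, the perturbed $\dot V$ inequality becomes a genuine exponential estimate for $\|\vec q_e\|$, and global exponential convergence of the orientation follows. A secondary technical point I would address is the discontinuity of $\mathrm{sgn}(q_e^\circ)$ in $\dot\theta_r$, which is confined to the isolated instants where $q_e^\circ$ changes sign and does not affect the Lyapunov argument.
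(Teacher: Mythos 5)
Your proposal follows essentially the same route as the paper's proof: the key identity $J(\theta)\,s_\theta = \left[\begin{smallmatrix} s_p \\ s_q \end{smallmatrix}\right]$, the Slotine--Li Lyapunov function $V_s = \tfrac{1}{2}s_\theta^\top H(\theta)\,s_\theta$ with skew-symmetry of $\dot{H}-2C$ to conclude $s_\theta \to 0$, and then the sliding-variable hierarchy to obtain tracking convergence. The one place you go beyond the paper is the final cascade step: the paper simply asserts that since $s_p$ and $s_q$ ``represent exponentially convergent error dynamics'' the tracking errors converge, implicitly invoking \cref{proposition:qe_local} even though that proposition assumes exact invariance ($s_q \equiv 0$), whereas your vanishing-perturbation analysis---in particular the observation that the modified $\mathrm{sgn}$ keeps $|q_e^\circ|$ repelled from the equator once $\|s_q\|$ is small, so the degenerate factor $|q_e^\circ|$ in $\dot{V}$ is eventually bounded below---is a more careful (and correct) way to close that step.
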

\begin{proof}
Under the premise that $\,J(\theta)\,$ is non-singular for the entire trajectory $\,p_d(t)\,$ and $\,q_d(t)$, it is straightforward to show $\, \dot{\theta} \rightarrow \dot{\theta}_r\, $ with \cref{eq:ik_control} and the Lyapunov function $\, V = \tfrac{1}{2} s_\theta^\top H(\theta) s_\theta$ \cite{slotine1991applied}. 
From the kinematic relationship between joint and end-effector velocities and the choice of $\dot{\theta}_r$, one can show
\begin{equation*}
    \begin{aligned}
        J(\theta)(\dot{\theta} - \dot{\theta}_r)  & = \left[ \begin{array}{c}
             v - v_d + \sigma\, p_e  \\
             \omega - R_e^\top \omega_d + \lambda\, \mathrm{sgn}\left(q_e^\circ\right)\vec{q}_e 
        \end{array} \right] =  \left[ \begin{array}{c}
             s_p \\
             s_q 
        \end{array} \right].
    \end{aligned}
\end{equation*}
Since $\, \dot{\theta} \rightarrow \dot{\theta}_r\, $ and $J(\theta) \succ 0$ along the trajectory then necessarily $\, s_p \rightarrow 0\, $ and $\, s_q \rightarrow 0$.
Therefore, since $\,s_p\,$ and $\,s_q\,$ represent exponentially convergent error dynamics, then $\,p \rightarrow p_d(t)\,$ and $\,q \rightarrow \pm q_d(t)\,$ exponentially as desired.
\end{proof}

\begin{remark}
    If the angular velocity vector is instead expressed in a global frame then the quaternion sliding variable $\mathfrak{s}_q$ from \cref{eq:s_global} can replace $s_q$ in \cref{eq:ik_control} and the same convergence result will be obtained.
\end{remark}

\begin{remark}
    The quaternion $q$ is allowed to converge to $q_d$ or $-q_d$ as these represent the same orientation (double cover property). 
    The quaternion sliding variables ensure $q$ converges to whichever is closer.
\end{remark}

The control law derived in \cref{thm:ik_quat} is similar in structure to other task-space controllers found in the literature.
However, ours is the first to (i) achieve global exponential convergence and (ii) prevent the unwinding phenomenon for both local and global angular velocity representations.
The latter is particularly important because unwinding will cause the manipulator to exhibit an unstable / inefficient motion arbitrarily close to the desired orientation.
Within the context of safety, this phenomenon is extremely undesirable as the manipulator will behave erratically despite being near the target orientation.
Note the above results apply more generally to any controller whose desired behavior is formulated in terms of controlling velocity $\dot{\theta}$ to some reference velocity $\dot{\theta}_r$ which may itself depend, e.g., on position.

\section{Conclusion}
In this note, we presented two quaternion-based sliding variables that ensure exponential convergence of an end-effector's orientation to any time-varying orientation command that avoids singularities in the geometric Jacobian.
The choice of sliding variable is solely dictated by whether it is more convenient to express the end-effector's angular velocity in a local or global frame; regardless, both yield global exponential convergence and systematically prevent unwinding. 
Future work will investigate extending these results to the dual quaternion representation.

\section{Appendix}
\begin{lemma}
\label{lemma:ode}
    Let $x$ be restricted to the interval $x \in [0,\,1)$. The origin is exponentially stable with convergence rate $\sigma > 0$ for the nonlinear system 
    \begin{equation}
    \label{eq:sys}
        \dot{x} = - \sigma \, x \, \sqrt{1-x}.
    \end{equation}
\end{lemma}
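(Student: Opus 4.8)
The plan is to solve \cref{eq:sys} in closed form by means of a change of variables that converts the nonlinear dynamics into an exactly linear, exponentially decaying scalar system, and then to translate the resulting estimate back to $x$. Before integrating, I would record well-posedness and forward invariance on $[0,1)$: since $\dot{x} = -\sigma\,x\,\sqrt{1-x} \le 0$ with equality only at $x=0$, every solution is non-increasing and satisfies $x(t)\in[0,x_0]$, so it stays bounded away from the singular point $x=1$ and the right-hand side remains locally Lipschitz along the trajectory. This removes any existence concern and confines the motion to a region where the forthcoming substitution is smooth.

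The key step is the substitution $u=\sqrt{1-x}$ followed by $w = \tfrac{1-u}{1+u} = \tfrac{1-\sqrt{1-x}}{1+\sqrt{1-x}}$, which is a smooth increasing bijection of $[0,1)$ onto itself with $w=0$ if and only if $x=0$. A direct chain-rule computation gives $\dot{u} = \tfrac{\sigma}{2}(1-u^2)$ and hence $\dot{w} = -\sigma\,w$, so that $w$ obeys a purely linear decay, $w(t) = w_0\,e^{-\sigma t}$, with $w_0 = \tfrac{1-\sqrt{1-x_0}}{1+\sqrt{1-x_0}}\in[0,1)$. Inverting the substitution yields the closed-form solution $x = \tfrac{4w}{(1+w)^2}$.

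From here the bound is immediate: because $(1+w)^2 \ge 1$ for $w\ge 0$, we get $x(t) \le 4\,w(t) = 4\,w_0\,e^{-\sigma t}$, which establishes exponential convergence of $x$ to the origin at rate exactly $\sigma$, with a constant $c := w_0 \in [0,1)$ that may be loosely bounded by $1$ when the lemma is invoked in \cref{proposition:qe_local}. The asymptotic sharpness follows from $x(t)\sim 4w_0\,e^{-\sigma t}$ as $t\to\infty$.

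I expect the genuine subtlety to lie not in any computation but in obtaining the \emph{sharp} rate $\sigma$. A naive comparison argument exploiting monotonicity — bounding $\sqrt{1-x}\ge\sqrt{1-x_0}=:\gamma$ and then $\dot{x}\le-\sigma\gamma\,x$ — only delivers the weaker rate $\sigma\gamma<\sigma$, since the factor $\sqrt{1-x}$ degrades the decay away from the origin and the advertised rate is attained only asymptotically as $x\to 0$. This is exactly why the linearizing transformation $w$ is needed: it absorbs the initial slow phase into the prefactor $w_0$ while preserving the correct asymptotic exponent. Recognizing this transformation — equivalently, seeing that $\int \tfrac{dx}{x\sqrt{1-x}}$ integrates to $-\ln\tfrac{1+\sqrt{1-x}}{1-\sqrt{1-x}}$ — is the one non-mechanical step of the proof.
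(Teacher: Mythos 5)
Your proof is correct and takes essentially the same route as the paper: the paper separates variables with $y=\sqrt{1-x}$ and integrates $\int dz/(z^2-1)$, which is exactly the computation your variable $w=\tfrac{1-\sqrt{1-x}}{1+\sqrt{1-x}}$ repackages as the linear equation $\dot{w}=-\sigma w$, and both arguments land on the identical closed-form solution $x(t)=\tfrac{4\,c\,e^{-\sigma t}}{(1+c\,e^{-\sigma t})^2}\le 4\,c\,e^{-\sigma t}$ with the same constant $c$. Your additional remarks on forward invariance and on why a naive comparison bound only gives the degraded rate $\sigma\sqrt{1-x_0}$ are correct refinements, not a different method.
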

\begin{proof}
    \Cref{eq:sys} is separable so with the substitution $ y = \sqrt{1-x}$ one obtains
    \begin{equation*}
        \int_{y_0}^{y} \frac{dz}{z^2 - 1} = - \sigma \, t,
    \end{equation*}
    which has the solution
    \begin{equation*}
        y(t) = \frac{1-c\,e^{-\sigma t}}{1 + c \, e^{-\sigma t}} ~~ \implies ~~ x(t) = \frac{4\, c \, e^{-\sigma t}}{(1+c\,e^{-\sigma t})^2} \leq  {4\, c \, e^{-\sigma t}}
    \end{equation*}
    for $c = (1-\sqrt{1-x_0}) / (1+\sqrt{1-x_0})$. 
    It is clear $x(t)$ tends to zero exponentially at rate $\sigma$ and overshoot $4\,c$ so the origin is exponentially stable.
\end{proof}

\balance
\bibliographystyle{ieeetr}
\bibliography{ref}

\end{document}